\def\1{\bm{1}}
\def\mD{{\bm{D}}}
\def\mI{{\bm{I}}}
\DeclareMathAlphabet{\mathsfit}{\encodingdefault}{\sfdefault}{m}{sl}
\SetMathAlphabet{\mathsfit}{bold}{\encodingdefault}{\sfdefault}{bx}{n}
\newcommand{\E}{\mathbb{E}}
\newcommand{\R}{\mathbb{R}}
\DeclareMathOperator*{\argmin}{arg\,min}
\definecolor{niceblue}{rgb}{0.0,0.19,0.56}
\newcommand{\cD}{{\cal D}}
\newcommand{\cG}{{\cal G}}
\newcommand{\cN}{{\cal N}}
\newcommand\swapifbranches[3]{#1{#3}{#2}}
\patchcmd{\DeclarePairedDelimiter}{\@ifstar}{\swapifbranches\@ifstar}{}{}
\DeclarePairedDelimiterX{\inp}[2]{\langle}{\rangle}{#1, #2}
\DeclarePairedDelimiterX{\abs}[1]{\lvert}{\rvert}{#1}
\DeclarePairedDelimiterX{\roundup}[1]{\lceil}{\rceil}{#1}
\DeclarePairedDelimiterX{\norm}[1]{\lVert}{\rVert}{#1}
\DeclarePairedDelimiterX{\cbr}[1]{\{}{\}}{#1} 
\DeclarePairedDelimiterX{\rbr}[1]{(}{)}{#1} 
\DeclarePairedDelimiterX{\sbr}[1]{[}{]}{#1} %
\newtheorem{theorem}{Theorem}
\newcommand{\f}{f}
\newcommand{\xiv}{\xi}
\newcommand{\g}{g}
\newcommand{\x}{x}
\newcommand{\y}{y}
\newcommand{\w}{w}
\newcommand{\n}{{n}}
\newcommand{\algname}[1]{{\texttt{#1}}\xspace}
\newtheorem{assumption}{Assumption}
\title{Low-Resource Machine Translation through the Lens of\\ Personalized Federated Learning}
\author{
  \textbf{Viktor Moskvoretskii\textsuperscript{1, 4}},
  \textbf{Nazarii Tupitsa\textsuperscript{2, 5, 6}},
  \textbf{Chris Biemann\textsuperscript{3}},
  \textbf{Samuel Horváth\textsuperscript{2}},
\\
  \textbf{Eduard Gorbunov\textsuperscript{2}},
  \textbf{Irina Nikishina\textsuperscript{3}}
\\
\\
  \textsuperscript{1}Skoltech,
  \textsuperscript{2}MBZUAI,
  \textsuperscript{3}Universität Hamburg,
  \textsuperscript{4}HSE University,
  \textsuperscript{5}MIPT, 
  \textsuperscript{6}Innopolis University
\\
  \small{
    \textbf{Correspondence:} \href{mailto:v.moskvoretskii@skol.tech}{v.moskvoretskii@skol.tech}
  }
}
\begin{document}
\maketitle
\begin{abstract}
We present a new approach called \algname{MeritOpt} based on the Personalized Federated Learning algorithm \algname{MeritFed} that can be applied to Natural Language Tasks with heterogeneous data. We evaluate it on the Low-Resource Machine Translation task, using the datasets of South East Asian and Finno-Ugric languages. In addition to its effectiveness, \algname{MeritOpt} is also highly interpretable, as it can be applied to track the impact of each language used for training. Our analysis reveals that target dataset size affects weight distribution across auxiliary languages, that unrelated languages do not interfere with the training, and auxiliary optimizer parameters have minimal impact. Our approach is easy to apply with a few lines of code, and we provide scripts for reproducing the experiments.\footnote{\url{https://github.com/VityaVitalich/MeritOpt}}
\end{abstract}

\section{Introduction}
While 7,000+ languages are currently in use worldwide, most existing Natural Language Processing (NLP) tasks and Large Language Models (LLMs) cover at most 500 of them \cite{DBLP:conf/lrec/LogachevaTSRUKA20,imanigooghari-etal-2023-glot500,DBLP:journals/corr/abs-2401-13303}. Many languages still possess low amount of resources, and a lot of NLP tasks for such languages remain unsolved. These facts indicate the difficulty and non-triviality of using LLMs that typically require large amounts of data. A popular direction of approaching low-resource languages (LRLs) is Machine Translation: automatic translation between most of these low-resource languages to high-resource ones is more economically and socially motivated than developing language-specific systems \cite{DBLP:journals/csur/RanathungaLSSAK23}.

To solve the tasks for LRLs, a lot of studies employ the related languages or languages originating from the same geographical and historical background \cite{imanigooghari-etal-2023-glot500,da-dalt-etal-2024-flor-effectiveness,millour-etal-2024-agettivu-aggitivu}. Despite the positive effect, it usually requires empirical knowledge, and many guesses and trials of different approaches when choosing the best combination of languages used, the most suitable amount of data, and the best learning strategy \cite{hedderich-etal-2021-survey}.

\paragraph{New approach.} To address these issues, we present our approach called \algname{MeritOpt} to train LLMs for the target language while multiple datasets for different languages are available. 
The key idea behind our method is inspired by \citet{tupitsa2024federated}, who focus on a specific (Personalized) Federated Learning formulation \citep{kairouz2021advances}. The method from \citet{tupitsa2024federated} is a special case of our method. We emphasize that the idea is borrowed from the Federated Learning (FL) field, however, no FL itself is applied in the paper. 
FL focuses on the specific setting of distributed training, when there exist multiple clients with their own (and private) data. In the scenario of  \citet{tupitsa2024federated}, versions of the Federated Averaging algorithm are natural choices for solving the problem since collecting raw data from the clients is prohibited due to privacy constraints. In contrast, we do not have clients or distributed systems for the problem we are considering. We focus on exploring the underlying algorithmic techniques in application to heterogeneous datasets rather than the Distributed Training. We are not restricted by any privacy constraints since the datasets we consider are open. That is, our work is not an FL paper.




Our approach is also more robust than the existing baselines as it adjusts the impact of each language (\emph{aggregation weights}) during training without any explicit inductive bias towards language relatedness.
In particular, our strategy benefits from the updates from the ``important'' languages and tolerates the updates from the ``not important'' ones. 
This setup is extremely beneficial for the interpretability of the training process.


In this study, we primarily focus on low-resource languages. However, our approach can be applied to any similar task (not necessarily in NLP). The main requirement is to possess multiple heterogeneous input datasets, while the goal is to train the model suitable for some target data distribution. 

Therefore, we apply the algorithm to the Machine Translation task using two datasets: the subset from the Large-Scale Multilingual Machine Translation Shared Task (Small Track \#2) \cite{wenzek-etal-2021-findings} and the subset of Sami languages from the multilingual benchmark for Finno-Ugric languages \cite{yankovskaya-etal-2023-machine}. To test the method effectively within our compute budget, we focus our study on scenarios with one target language and the remaining languages as auxiliary languages. Our approach can be further applied to the datasets with several target languages and several translation directions. 




Two research questions are addressed in this paper: (i) \textit{``Can \algname{MeritOpt} improve the results of the multilingual or single language baselines using aggregation weights?''} and (ii) \textit{``How do the target language weights and the weights of related and non-related languages change across training?''}.

The contributions of the paper are as follows:

\begin{itemize}
    \item We present a new algorithmic framework for the training from heterogeneous input datasets and test it on the Indonesian languages and Sami languages of the Finno-Ugric group.
    \item We explore how languages interact with each other during training, as our approach allows measuring the impact (which language contributes more) at each training step.

    \item We perform an ablation study to analyze the effects of unrelated languages, training dataset size, and auxiliary \algname{MeritOpt} parameters.
    \item Under certain assumptions, we rigorously prove that the proposed method converges to some neighborhood of the solution.

    \item Finally, we present a natural analogy between two seemingly unrelated setups -- Federated Learning and LLMs training on low-resource languages: although we do not have workers (clients) in the later setup, we can interpret each dataset for some language as a client, and we can also interpret the languages itself as some data-distributions of those clients. We believe that this viewpoint/analogy is interesting on its own and opens a giant room for future research in NLP. Moreover, our paper indicates that this direction is indeed prominent: we adjusted one particular FL algorithm to the setting of training LLMs for low-resource languages, which are not directly related to FL, and showed promising results.
\end{itemize}

\section{Related Work}

In this section, we discuss the existing methods for low-resource language NLP tasks, especially for low-resource machine translation (LRMT) \cite{10.1162/coli_a_00446}, and also give a brief overview of the existing methods in Personalized Federated Learning, and methods to estimate the impact of auxiliary data.

Regarding similar approaches, the paper of \citet{wang-etal-2020-balancing} also assigns the non-uniform weights for different languages. However, we do not compute any gradient similarity metrics and approximately solve an auxiliary problem to find the aggregation weights.

\subsection{Low-Resource Machine Translation}

Existing approaches for NLP tasks for LRLs usually fall into the following categories: supervised or unsupervised, single language training or multilingual training, continuous pre-training or finetuning, with or without data augmentation, balanced or imbalanced datasets \cite{hedderich-etal-2021-survey,DBLP:conf/ijcai/Wang0LQL21,electronics13030648,goyal-etal-2020-efficient}. This list of categories is not extensive. However, they all aim to develop the best learning strategy given limited data.

In the following subsections, we discuss the methods developed or applied for the datasets on South East Asian Languages and Finno-Urgic benchmarks, the main targets of our research.

\subsubsection{LRMT for South East Asian Languages}

Several approaches have been developed to solve the Large-scale Multilingual Machine Translation task (Shared Task on WMT-21). 
The organizers \cite{wenzek-etal-2021-findings} summarize all the used approaches and provide the FLORES model  \cite{goyal-etal-2022-flores} extended to 124 languages. 
Most of the participants, \citet{yang-etal-2021-multilingual-machine,budiwati-etal-2021-optimize,liao-etal-2021-back}, use a generic pre-trained multilingual models like DeltaLM \cite{DBLP:journals/corr/abs-2106-13736} or FLORES \cite{goyal-etal-2022-flores}
and fine-tune it correspondingly with the vast collected parallel data, together with applying progressive learning and iterative back-translation. 
\citet{sutawika2021data} use a standard Seq2Seq Transformer model without any training or architecture tricks, relying mainly on the strength of the data preprocessing techniques and filtering.


Given our focus on a setup with very limited data and our available computational resources, we concentrate on evaluating our specific approach. Therefore, our results cannot be compared to the above-mentioned methods.

\subsubsection{LRMT for Finno-Ugric languages}

Regarding the Finno-Ugric languages, very few approaches are developed or tested on the benchmark. \citet{DBLP:journals/bjmc/TarsTF22} uses the standard M2M100 model \cite{m2m100} enhanced with the following steps: vocabulary extension in the tokenizer, data filtering, and preprocessing. \citet{yankovskaya-etal-2023-machine} improves previous results with back-translation and synthetic data as well as with the sampled high-resource language pairs to reduce catastrophic forgetting. Our models involve the same baselines; however, our training data consists of Sami languages (input) and Finish (output). Therefore, we also cannot compare the results directly to the above-mentioned methods.

\subsection{Personalized Federated Learning}

Federated Learning (FL) \citep{konecny2016federated, mcmahan2017communication} is a modern and rapidly developing part of Machine Learning, considering the training on the data distributed over multiple clients \citep{kairouz2021advances}. In the standard scenario, the goal is to train one global model that suits multiple clients, i.e., solve standard empirical risk minimization. In scenarios with heterogeneous data, the global model can show suboptimal results for particular clients, which necessitates considering Personalized Federated Learning (PFL) formulations to achieve better results on the client's data while getting benefits from collaboration with others. 

In the training of LLMs for the target (low-resource) language using the data in multiple languages, the goal is quite similar: to achieve good results for the target language while getting benefits from the model updates for other available languages. 
Informally speaking, by associating languages with clients, one can get a correspondence between PFL formulations and NLP formulations for low-resource languages. 
Therefore, in our work, we adjust the algorithmic ideas 
from \citep{tupitsa2024federated} 
to the training of LLMs for low-resource languages. We specify again that we do not use Personalized FL directly: our method is based on an analogy with the \algname{MeritFed} method from Federated Learning.

There also exist multiple PFL formulations and methods for solving them 
with their own advantages and limitations, e.g., see \citep{fallah2020personalized, collins2021exploiting, hanzely2020lower, kulkarni2020survey, wu2021fast}. However, the works on PFL focus on different scenarios from our setup, i.e., they consider distributed training. 

\subsection{Impact of Auxilary Data}

Many existing papers rely on 
auxiliary data, especially when the given dataset is too small. \citet{schroder-biemann-2020-estimating} automatically assesses the similarity of sequence tagging datasets to identify beneficial auxiliary data for Multi-Task Learning or Transfer Learning setups.
\citet{DBLP:conf/icml/ChenWGL022} propose a joint task and data scheduling model for auxiliary learning by creating a mapping from task, feature, and label information to the schedule in a parameter-efficient way.

Regarding LRMT, studies use the related languages when little data for the target language is given. One of the attempts to approach each language differently during training is made by \citet{huo-etal-2024-gradient-consistency}. They dynamically allocate parameters of an appropriate scale to each language direction based on the consistency between the gradient of the individual language and the average gradient. \citet{millour-etal-2024-agettivu-aggitivu,da-dalt-etal-2024-flor-effectiveness} 
show that datasets on closely related languages are highly beneficial for applying to the target low-resource language. 
\citet{imanigooghari-etal-2023-glot500} also investigate the positive effects of closely related languages on the Glot-500 model. They analyze the impact of related languages via continued pre-training and confirm better performance for languages with their language family or script present in training.


\section{Methodology}\label{section:methodology}

\begin{algorithm*}[t]
    \caption{\algname{MeritOpt}: General Algorithmic Framework for Learning from Heterogeneous Data}\label{alg:meritfed}
    \begin{algorithmic}[1] 
        \State {\bfseries Input:} Number of steps $T$, starting point $\x^0 \in \R^d$, stepsizes $\{\gamma_t\}_{t=1}^T$ ($\gamma_t > 0$), optimization update rule $\texttt{OptStep}(x, g, \gamma): \R^d \times \R^d \times \R \to \R^d$, datasets $\{\mD_{i}\}_{i=1}^n$, target validation dataset $\widehat\mD$
        \For{$t=0,1,\ldots, T$}
            \ForAll{$i= 1,\ldots,\n$ \textbf{in parallel}}
                \State Compute stochastic gradient 
                $\g_i(\x^t)$ from dataset $\mD_i$ 
            \EndFor            
        \State
        \label{lst:line:aux_problem} 
        $\w^{t+1} \approx \argmin\limits_{w\in \Delta_1^n} f_{\widehat\mD}\left(\texttt{OptStep}\left(x^t, \sum\limits_{i=1}^n w_i g_i(x^t), \gamma_t\right)\right)$
        \State $\x^{t+1} = \texttt{OptStep}\left(x^t, \sum\limits_{i=1}^n w_i^{t+1} g_i(x^t), \gamma_t\right)$ 
        \EndFor
    \end{algorithmic}
\end{algorithm*}


\paragraph{General setup.} We start with the description of the general problem formulation that our approach is suitable for. That is, we consider the scenario when $n \geq 1$ datasets $\{\mD_i\}_{i=1}^n$ are available for training, and the goal is to train the model for some data distribution $\cD$ using this collection of datasets. More precisely, we focus on the standard learning problem \citep{shalev2014understanding}: $\min_{x\in \R^d} f_{\cD}(x)$, 
where $f_{\cD}:\R^d \to \R$ is the expected loss computed for the data distribution $\cD$, i.e., $f_\cD := \E_{\xi\sim \cD}[f_{\xi}(x)]$ with $f_\xi : \R^d \to \R$ being a loss on sample $\xi$ and $\E_{\xi\sim \cD}[\cdot]$ denoting an expectation w.r.t.\ $\xi$ coming from the target distribution $\cD$, and $x \in \R^d$ represents a vector of model parameters, i.e., weights of the network. In practice, data distribution $\cD$ is typically unknown. Therefore, to approximate $f_{\cD}(x)$, finite dataset $\widehat \mD$ sampled from distribution $\cD$ is used. Throughout the paper, we call this dataset the target one and denote the corresponding (empirical) loss as $f_{\widehat \mD}(x)$. In addition, we assume that a collection of datasets $\{\mD_i\}_{i=1}^n$ is available for the training. 

We assume that $\mD_1$ is sampled from the target distribution $\cD$, and we make no assumptions on the other datasets. In particular, $\{\mD_i\}_{i=2}^n$ can be arbitrary heterogeneous and different from $\mD_1$ and $\widehat \mD$. However, if some of the available datasets are sampled from distributions that are close to $\cD$, they can be quite useful for the training. This idea serves as the main motivation behind our approach.

\paragraph{Algorithmic framework.} To solve the described problem, we propose a generic algorithmic framework -- \algname{MeritOpt} (see Algorithm~\ref{alg:meritfed}) -- inspired by \algname{MeritFed} proposed by \citet{tupitsa2024federated} for solving Personalized Federated Learning problems. \algname{MeritOpt} can be seen as a ``wrapper'' for an optimization method having update rule $x^{t+1} = \texttt{OptStep}(x^t, g(x^t), \gamma_t)$, where $x^t$ represents the weights of the model after step $t$, $g(x^t)$ is the stochastic (mini-batched) gradient computed at $x^t$, and $\gamma_t$ is the learning rate. For example, when the underlying method is Stochastic Gradient Descent (\algname{SGD}) \citep{robbins1951stochastic}, we have  $\texttt{OptStep}(x^t, g(x^t), \gamma_t) =  x^t - \gamma_t g(x^t)$ and Algorithm~\ref{alg:meritfed} reduces to \algname{MeritFed}\footnote{\algname{MeritFed} = \algname{MeritOpt-SGD}.} from \citep{tupitsa2024federated}. However, we can apply \algname{MeritOpt} to the update rule of any stochastic first-order method, e.g., \algname{Adam} \citep{kingma2014adam} and its variations, \algname{AdaGrad} \citep{streeter2010less, duchi2011adaptive}, \algname{RMSProp} \citep{hinton2012neural}, and other methods. In our experiments, we use \algname{Adam} as $\texttt{OptStep}(x, g, \gamma)$. The resulting method -- \algname{MeritOpt-Adam} -- is a new method that was never used or analyzed before.

In addition to the update rule $\texttt{OptStep}(x, g, \gamma)$, \algname{MeritOpt} takes $n$ input datasets $\{\mD_i\}_{i=1}^n$ and $1$ target validation dataset $\widehat \mD$. At each iteration, the method computes (mini-batched) stochastic gradient $g_i(x^t)$ using the corresponding dataset $\mD_i$ for each $i=1,\ldots,n$. Then, to construct the update direction, \algname{MeritOpt} searches appropriate aggregation weights (that can be interpreted as ``merits'' for different languages) $w^{t+1} = (w_1^{t+1},\ldots, w_n^{t+1})^\top$ (see Line~\ref{lst:line:aux_problem}) and then makes a step $\x^{t+1} = \texttt{OptStep}\left(x^t, \sum_{i=1}^n w_i^{t+1} g_i(x^t), \gamma_t\right)$ using the computed weighted average of the stochastic gradients. We emphasize that the choice of aggregation weights $w^{t+1}$ is crucial: for example, if datasets $\{\mD_i\}_{i=2}^n$ came from distributions significantly different from the target distribution $\cD$ and we choose uniform weights, i.e., $w_1^{t+1} = \ldots = w_n^{t+1} = \nicefrac{1}{n}$, then the optimization step with the update vector $\sum_{i=1}^n w_i^{t+1} g_i(x^t)$ can be useless (on average) in terms of solving the target problem. Moreover, if some datasets came from distributions close to $\cD$, it is natural to use the corresponding stochastic gradients with larger weights to benefit from them.

\algname{MeritOpt} addresses this issue in Line~\ref{lst:line:aux_problem}: the goal is to find aggregation weights $w^{t+1} \in \Delta_1^n$, where $\Delta_1^n := \{y \in \R^n\mid \sum_{i=1}^n y_i = 1,\; y_i \geq 0 \; \forall \; i=1,\ldots,n\}$ is the $n$-dimensional probability simplex, such that the loss $f_{\widehat \mD}$ on the target validation dataset $\widehat \mD$ is minimized after the step $\texttt{OptStep}\left(x^t, \sum_{i=1}^n w_i^{t+1} g_i(x^t), \gamma_t\right)$ that depends on $w^{t+1}$. If $\widehat \mD$ is sufficiently large, then $f_{\widehat \mD}$ can be seen as a good approximation of $f_{\cD}$ 
\citep{shalev2009stochastic},
and 
optimizing $f_{\widehat\mD}$ leads to sufficiently good solution for $f_{\cD}$. In other words, given stochastic gradients $g_i(x^t)$ computed from different datasets $\{\mD_i\}_{i=1}^n$, \algname{MeritOpt} tries to find the best-weighted average of them to make an optimization step. Following \citet{tupitsa2024federated}, we apply several steps of Stochastic Mirror Descent  \citep{nemirovskij1983problem} to solve the problem in Line~\ref{lst:line:aux_problem} approximately (see Appendix~\ref{appendix:MD}).

\begin{table*}[ht!]
\centering
\small
\resizebox{\textwidth}{!}{
\begin{tabular}{lcccccccc}
\toprule
\multirow{2}{*}{\textbf{Method}} & \multicolumn{2}{c}{\textbf{Inari Sami}} & \multicolumn{2}{c}{\textbf{Skolt Sami}} & \multicolumn{2}{c}{\textbf{South Sami}} & \multicolumn{2}{c}{\textbf{North Sami}} \\
\cmidrule(lr){2-3} \cmidrule(lr){4-5} \cmidrule(lr){6-7} \cmidrule(lr){8-9}

 & \textbf{Score} & \textbf{Steps} & \textbf{Score} & \textbf{Steps} & \textbf{Score} & \textbf{Steps} & \textbf{Score} & \textbf{Steps} \\
\midrule
FT\textsubscript{OnlyT} & 9.44 $\pm$ 0.20 & 1.5K & 38.83 $\pm$ 0.31 & 2K & 48.70 $\pm$ 0.14 & 8K & 39.26 $\pm$ 0.33 & 53K \\
FT\textsubscript{All} & 5.56 $\pm$ 0.29 & 21K & 34.11 $\pm$ 0.23 & 23K & 44.62 $\pm$ 0.10 & 23K & 33.57 $\pm$ 2.34 & 12K \\
FT\textsubscript{NoT} & 2.38 $\pm$ 0.09 & 16K & 11.62 $\pm$ 0.37 & 23K & 16.63 $\pm$ 0.35 & 16K & 10.16 $\pm$ 0.16 & 2K \\
\midrule
CP\textsubscript{All} & 51.39 $\pm$ 0.05 & 30K & 44.90 $\pm$ 0.12 & 25K & 11.60 $\pm$ 0.29 & 23K & 39.78 $\pm$ 0.08 & 69K \\
CP\textsubscript{NoT} & 50.14 $\pm$ 0.04 & 31K & 43.40 $\pm$ 0.13 & 25K & 11.09 $\pm$ 0.24 & 23K & 39.30 $\pm$ 0.18 & 65K \\
\midrule
\algname{MeritOpt} & \textbf{52.08 $\pm$ 0.01} & 12K & \textbf{50.27 $\pm$ 0.17} & 12K & \textbf{13.26 $\pm$ 0.17} & 2.5K & 38.526 $\pm$ 1.39 & 30K \\
\bottomrule
\end{tabular}
}
\caption{Mean SpBLEU scores and the number of steps required to reach them for baselines and \algname{MeritOpt} within Finno-Samic low-resource languages.}
\label{table:results_fin}
\vspace{0.3cm}
\centering
\small
\resizebox{\textwidth}{!}{
\begin{tabular}{lcccccccccccc}
\toprule
\multirow{2}{*}{\textbf{Method}} & \multicolumn{6}{c}{\textbf{Tagalog}} & \multicolumn{6}{c}{\textbf{Java}} \\
\cmidrule(lr){2-7} \cmidrule(lr){8-13}
 & \multicolumn{2}{c}{\textbf{79K}} & \multicolumn{2}{c}{\textbf{155K}} & \multicolumn{2}{c}{\textbf{555K}} & \multicolumn{2}{c}{\textbf{79K}} & \multicolumn{2}{c}{\textbf{128K}} & \multicolumn{2}{c}{\textbf{555K}} \\
\cmidrule(lr){2-3} \cmidrule(lr){4-5} \cmidrule(lr){6-7} \cmidrule(lr){8-9} \cmidrule(lr){10-11} \cmidrule(lr){12-13}
 & \textbf{Score} & \textbf{Steps} & \textbf{Score} & \textbf{Steps} & \textbf{Score} & \textbf{Steps} & \textbf{Score} & \textbf{Steps} & \textbf{Score} & \textbf{Steps} & \textbf{Score} & \textbf{Steps} \\
\midrule
FT\textsubscript{OnlyT} & 28.69 $\pm$ 0.10 & 8K & 30.48 $\pm$ 0.05 & 44K & 33.88 $\pm$ 0.07 & 52K & 19.23 $\pm$ 0.02 & 500 & 19.69 $\pm$ 0.01 & 1K & 20.75 $\pm$ 0.10 & 3.5K \\
FT\textsubscript{All} & 24.78 $\pm$ 0.02 & 12K & 26.53 $\pm$ 0.17 & 25K & 30.02 $\pm$ 0.03 & 79K & 19.26 $\pm$ 0.02 & 12K & 19.28 $\pm$ 0.07 & 25K & 19.92 $\pm$ 0.06 & 85K \\
FT\textsubscript{NoT} & 20.45 $\pm$ 0.08 & 7K & 20.41 $\pm$ 0.07 & 11K & 20.34 $\pm$ 0.09 & 53K & 18.73 $\pm$ 0.02 & 12K & 18.80 $\pm$ 0.04 & 25K & 18.94 $\pm$ 0.09 & 85K \\
\midrule
CP\textsubscript{All} & 29.24 $\pm$ 0.06 & 21K & 30.99 $\pm$ 0.04 & 40K & 33.89 $\pm$ 0.15 & 124K & 19.43 $\pm$ 0.14 & 12K & 20.05 $\pm$ 0.12 & 25K & 20.97 $\pm$ 0.13 & 87K \\
CP\textsubscript{NoT} & 28.72 $\pm$ 0.16 & 15K & 30.50 $\pm$ 0.12 & 42K & 33.74 $\pm$ 0.19 & 129K & 19.46 $\pm$ 0.12 & 12K & 19.95 $\pm$ 0.12 & 25K & 21.19 $\pm$ 0.09 & 89K \\
\midrule
\algname{MeritOpt} & \textbf{29.73 $\pm$ 0.04} & 14K & \textbf{31.42 $\pm$ 0.07} & 14K & 33.53 $\pm$ 0.27 & 47K & \textbf{19.74 $\pm$ 0.03} & 2K & \textbf{20.23 $\pm$ 0.11} & 3K & \textbf{21.44 $\pm$ 0.13} & 8K \\
\bottomrule
\end{tabular}
}
\caption{Mean SpBLEU scores and the number of steps required to reach them for baselines and \algname{MeritOpt} within the different data sizes of Javanese and Tagalog languages.}
\label{table:results_ind}
\end{table*}

\paragraph{Application to NLP.} The described approach can be applied to the training of LLMs for LRLs. In this case, $\{\mD_i\}_{i=1}^n$ correspond to the input datasets in $n$ different languages. In particular, $\mD_1$ is the training dataset for the target language\footnote{One can interpret all possible texts in the target language as some distribution $\cD$. In this interpretation, $\mD_1$ can be seen as some dataset sampled from language $\cD$.} and $\widehat\mD$ is the target validation dataset for the same language. The remaining datasets $\{\mD_i\}_{i=2}^n$ are for other languages. Some of these languages can be related to the target one, but, in general, we allow the usage of datasets in significantly different languages as well: \algname{MeritOpt} automatically adjusts aggregation weights and assigns higher weights to more beneficial languages. 
Therefore, aggregation weights $w^{t+1}$ can be used to measure the impact of selected languages on the model's training for the target language. In other words, we extend the training target language dataset and prevent drifting towards the solution for other languages. We also note that in the original work \citep{tupitsa2024federated}, \algname{MeritFed} was tested on on different problems (image and emotion classification with different models).

\section{Experiments}

In this section, we apply the methodology to learn low-resource languages with the help of related languages. We also discuss the data used, the baselines, and the evaluation metrics.

\subsection{Datasets}

To test the developed method, we consider datasets with related languages that either belong to the same language family or are geographically related, which we expect to be ``helpful'' during the training procedure. We focus exclusively on settings with related languages, as this approach is more computationally efficient. However, our method does not have any inherent bias towards language relatedness and can be applied to any number of languages in the training set. As demonstrated in Section~\ref{sec:results}, it becomes even more effective with addition of unrelated languages.

For our experiments, we select a subset from the Large-Scale Multilingual Machine Translation Shared Task (Small Track \#2) \cite{wenzek-etal-2021-findings} and the subset of Sami languages from the multilingual benchmark for Finno-Ugric languages \cite{yankovskaya-etal-2023-machine}. We describe each dataset in detail in the following paragraphs.

\paragraph{South East Asian languages Dataset.}

For the first round of experiments, we select one of the small tracks, Large-Scale Multilingual Machine Translation Shared Task, comprising translation pairs between fairly related languages and English and not requiring substantial computational resources at training time. We stick to Javanese, Indonesian, Malay, Tagalog, and Tamil as input languages and English as output. As target languages, we utilize Javanese and Tagalog as the smallest language pairs in the dataset. We perform our experiments on multiple dataset scales: 80K (small), 150K (medium), and 500K (large). 
Our primary goal is to test the method; therefore, we do not perform experiments on the whole dataset, leaving this to future work.  
For additional experiments, we utilize the Hungarian dataset from Small Track \#1. All the dataset statistics are provided in Table \ref{tab:stats_ind} for the initial dataset and for the datasets created for our experiments.

\begin{figure*}[th!]
    \centering
    \includegraphics[width=\textwidth]{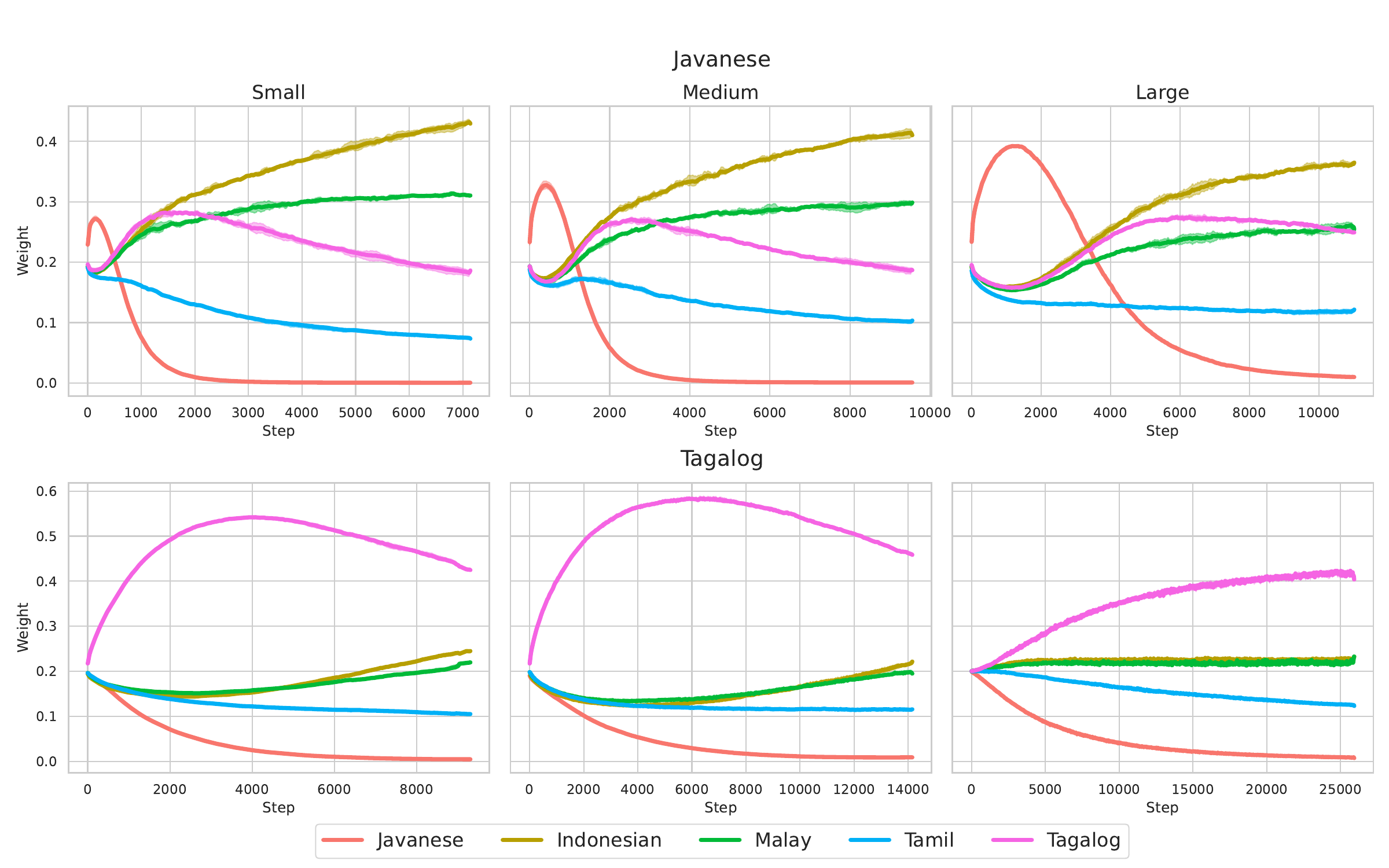}
    \caption{Weights distribution for South East Asian languages. Target languages and data sizes are in captions.}
    \label{fig:weight_ind}
\end{figure*}

\paragraph{Finno-Samic Languages Dataset.}
Regarding the dataset compiled from the Finno-Ugric benchmark \cite{yankovskaya-etal-2023-machine}, we stick to the Sami languages as the only option matching our criteria: parallel training datasets of different sizes with the same output language (Finnish) for those pairs, parallel development and test datasets of good quality. Unfortunately, such data is available only for Finno-Samic languages\footnote{\url{https://huggingface.co/datasets/tartuNLP/finno-ugric-benchmark},\\ \url{https://huggingface.co/datasets/tartuNLP/finno-ugric-train}}, such as tartuNLP/finno-ugric-benchmark North Sami, South Sami, Inari Sami, Skolt Sami. The dataset statistics are presented in Table \ref{tab:stats_sami}. In future experiments, we plan to extend the datasets to other languages and directions from the benchmark.

\subsection{Baselines}

For our baselines, we consider fine-tuning to the target language both with and without various forms of prior continual pretraining:
\begin{itemize}
    \item FT\textsubscript{All} --- Fine-tuning to all languages including the target language;
    \item FT\textsubscript{NoT} --- Fine-tuning to all languages except the target language;
    \item FT\textsubscript{OnlyT} --- Fine-tune to the target language only;
    \item CP\textsubscript{All} --- Continuous Pretraining to all languages, followed by additional fine-tuning to the target language;
    \item CP\textsubscript{NoT} --- Continuous Pretraining to all languages but the target, followed by additional fine-tuning to the target language.
\end{itemize}
We use the M2M100 model with 418M parameters as our base model \cite{fan2020englishcentric}. For Finno-Ugric languages, special language tokens are added and learned since the model was not pretrained for those languages. More training details and configurations are provided in Appendix \ref{sec:appendix_params}.

\subsection{Evaluation}

We use SpBLEU metrics in our evaluation as in \citet{sutawika2021data}, utilizing SacreBLEU \cite{post2018call}. The generation parameters are adopted from \citet{xie-etal-2021-tentrans}, employing beam search with 4 beams, and the temperature set to 1.

\begin{figure*}[ht!]
    \centering
    \includegraphics[width=\textwidth]{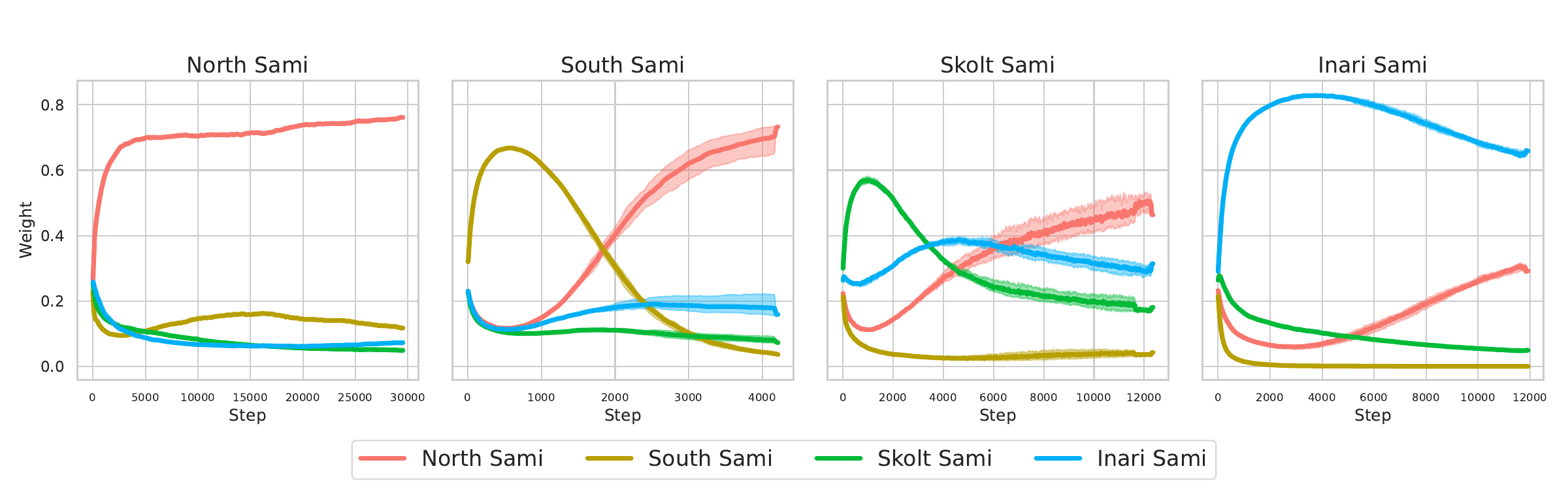}
     \vspace{-0.8cm}
    \caption{Weights distribution across Finno-Samic languages. Target languages are mentioned in captions.}
    \label{fig:weight_fin}
    \vspace{-0.2cm}
\end{figure*}

\section{Results and Discussion} \label{sec:results}


Tables \ref{table:results_fin} and \ref{table:results_ind} show that \algname{MeritOpt} is indeed helpful during training: our approach achieves better performance for most setups and languages. for Javanese and Tagalog languages (small and medium) and for Sami languages of comparable sizes (South, Scolt, and Inari). 

\begin{figure}[t!]
    \centering
    \vspace{-0.2cm}
    \includegraphics[width=0.4\textwidth]{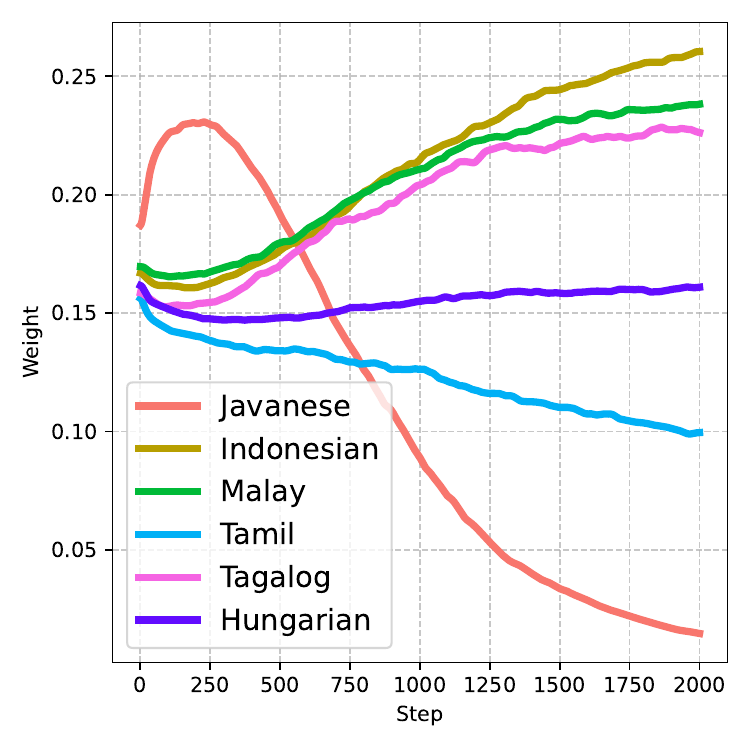}
    \vspace{-0.3cm}
    \caption{Weights distribution for target Indonesian language with unrelated Hungarian included.}
    \label{fig:hun_weig}
    \vspace{-0.2cm}
\end{figure}

\paragraph{Impact of Aggregation Weights.}\label{sec:unrelated_lang} We can see that the methods assign higher weights to the target language at first, followed by a drop, while other weights increase. Therefore, Javanese benefits more from the Indonesian language, while Tagalog's, higher-weighted languages are Indonesian and Malay. Interestingly, while spoken in South East Asia, the Tamil language does not belong to the same language family as the others. This fact is reflected in Figure~\ref{fig:weight_ind}: Tamil always contributes less than other languages. For Sami languages, North Sami seems always to be the most beneficial.

\paragraph{No Overfitting.} An important observation is that the algorithm helps to prevent the model from overfitting: the weight of the target language decreases once the model learns the small amount of data available for the target language; additional languages serve as regularization to keep the model converging. Probably, that partially explains the non-zero weights of Tamil, which does not belong to the same language family, although being spoken in South East Asia. 

\paragraph{Unrelated Language.} To check the hypothesis that unrelated language serves as regularization, we conducted an additional experiment and added the Hungarian language from the Finno-Ugric family to training. As shown in Figure~\ref{fig:hun_weig}, its weights are also non-zero. 
Moreover, the SpBLEU scores remained nearly consistent across all MD parameters and Adaptive Batch configurations, supporting the regularization role of additional languages. 
To further extend our experiment and validate our hypothesis, we evaluated the model's performance on the Java small dataset by incorporating five unrelated languages (Croatian, Serbian, Macedonian, Estonian, Hungarian) into the training set. The results in Table~\ref{table:5unrel} demonstrate that the model benefits from the inclusion of additional languages, even being unrelated.

\paragraph{Size of the Target Language Dataset.}

For Tagalog-large and North Sami, the algorithm relies on the target language dataset more than on additional languages and does not outperform the Continuous Pretraining baseline. On the contrary, for small and medium datasets, the algorithm needs from 2 to 10 times fewer main gradient steps to outperform the baselines. 

We assume that this happens because the amount of data from the target language is enough, and the algorithm keeps assigning high weights to the target language and trains the model on the target language only.  
Another possible reason for the inferior performance could be excessive gradient steps involving non-target languages. This might ``distract'' the model and fail to provide significant benefits. Since at each step, we compute the stochastic gradients for other languages, too, the method does not pass the whole dataset of the target language, given the computational resources for the experiment. Therefore, the method does not utilize all potentially useful information from the target language. 

\begin{figure}[t!]
    \centering
    \vspace{-0.3cm}
    \includegraphics[width=0.4\textwidth]{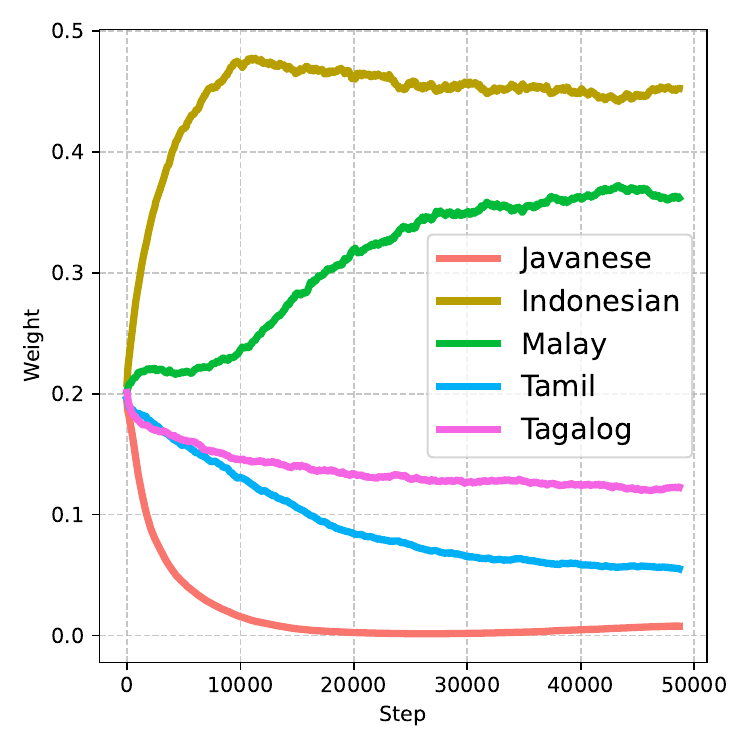}
     \vspace{-0.3cm}
    \caption{Weights distribution for languages with target Indonesian on \textit{small} subset.}
    \label{fig:ind_weig}
    \vspace{-0.2cm}
\end{figure}

This hypothesis is supported by an additional experiment on the Indonesian language as the target language with the biggest dataset to see the distribution of the weights for a longer number of steps ($\sim50$K). From Figure~\ref{fig:ind_weig}, we observe the evolution of corresponding aggregation weights: it keeps growing during the training, which indicates its significantly higher importance on the model quality than other languages.
Once the model learns the dataset better, the weight of the Indonesian slightly decreases and gets stuck, while the weight of the Malay starts to grow. 
We assume that these observations might be useful for further experiments: languages that stop contributing to the algorithm convergence can be ``dropped'' during training.

\paragraph{Adaptive Batch Experiments.} \label{sec:adaptive_batch}

We hypothesize that leveraging high-resource languages could improve gradient approximation by providing more samples. Based on this, we develop an Adaptive Batch procedure. This method allocates the total batch size ($512$ in our experiments) and samples batch size from the total size for each language proportionally to the percentage of each language present in the dataset. Thus, high-resource languages receive larger batch sizes. To optimize convergence, we set batch size limits, with a lower bound of $32$ and an upper bound of $128$, as shown to be effective in previous studies \cite{keskar2017largebatch,bengio2012practical}.

\begin{figure}[t!]
    \centering
    \includegraphics[width=0.4\textwidth]{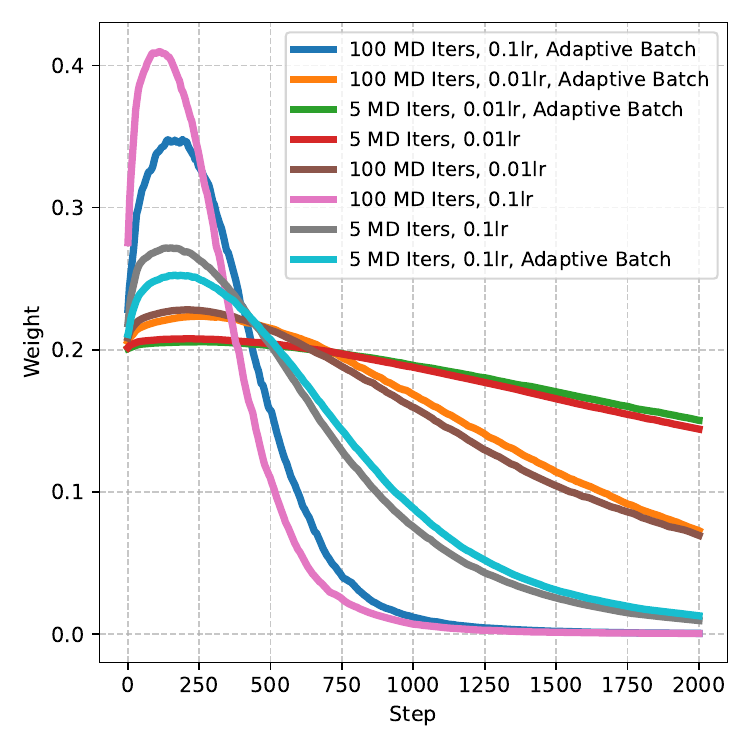}
    \vspace{-0.3cm}
    \caption{Weights for target language (Javanese-small) with different Mirror Descent parameters.}
    \label{fig:md_params}
    \vspace{-0.2cm}
\end{figure}

However, our results indicate that the Adaptive Batch procedure is rarely beneficial. We believe this is due to the downside of better gradient approximation. Our method suggests that assigning higher weight to high-resource languages due to their well-estimated gradients may hinder the learning of the target language. This is illustrated in Figure~\ref{fig:md_params}, where adding an Adaptive Batch leads to a lower weight for the target language.

\paragraph{Mirror Descent Parameters Impact.} \label{sec:MD_params}

We conduct experiments with various \algname{MeritOpt} settings, adjusting the Mirror Descent learning rate to $0.1$ and $0.01$ and the number of iterations to $5$ or $100$. These experiments are performed on the \textit{small} subset of the South East Asian dataset, using Javanese as the target language. 

As shown in Table~\ref{table:ablation}, the Mirror Descent parameters have little impact, with no clear trend emerging\footnote{We conjecture that (Stochastic) Mirror Descent struggles to solve the auxiliary problem from Line~\ref{lst:line:aux_problem} with sufficiently good accuracy since it can be viewed as a variant of \algname{SGD} for problems with non-Euclidean prox-structure and \algname{SGD} is known to perform poorly in NLP tasks \citep{zhang2020adaptive}. Investigation of other alternatives (e.g., MD version of \algname{Adam}) is a prominent direction for future research.}. For 5 iterations, a higher learning rate, and no Adaptive Batch, the algorithm performs better when no unrelated language is present. However, this changes when an unrelated language is included. For 100 iterations, a lower learning rate, and no Adaptive Batch, the model consistently yields better results. Based on those observations, we have chosen 5 MD iterations with a learning rate of $0.1$ for all experiments due to its high performance and faster computation times.



\begin{table}[t!]
\centering
\resizebox{0.48\textwidth}{!}{
\begin{tabular}{cccccc}
\toprule
\multirow{2}{*}{\shortstack{\textbf{MD} \\ \textbf{Iterations}}}& \multirow{2}{*}{\shortstack{\textbf{Learning} \\ \textbf{Rate}}} & \multirow{2}{*}{\shortstack{\textbf{Adaptive} \\ \textbf{Batch}}} & \multicolumn{2}{c}{\textbf{SpBLEU}} \\
\cmidrule(lr){4-5}
 & & & \textbf{Relevant} & \textbf{+Irrelevant} \\
\midrule
\multirow{4}{*}{5} & \multirow{2}{*}{0.1} & - & 19.74 & 19.79 \\
 & & + & 19.67 & 19.62 \\
\cmidrule(lr){2-5}
 & \multirow{2}{*}{0.01} & - & 19.72 & 19.72 \\
 & & + & 19.67 & 19.81 \\
\midrule
\multirow{4}{*}{100} & \multirow{2}{*}{0.1} & - & 19.70 & 19.65 \\
 & & + & 19.57 & 19.59 \\
\cmidrule(lr){2-5}
 & \multirow{2}{*}{0.01} & - & 19.75 & 19.72 \\
 & & + & 19.74 & 19.64 \\
\bottomrule
\end{tabular}
}
\caption{Scores and Settings Grouped by Mirror Descent iterations for the Javanese-small dataset.}
\label{table:ablation}
\end{table}

\begin{table}[t!]
\centering
\resizebox{0.4\textwidth}{!}{
\begin{tabular}{cc}
\toprule
\textbf{Languages} & \textbf{SpBLEU} \\
\midrule
South East Asian & 19.74 \\
South East Asian + Hungarian & 19.79 \\
South East Asian + 5 European & \textbf{19.98} \\
\bottomrule
\end{tabular}
}
\caption{Scores and languages in train for the Javanese-small dataset.}
\label{table:5unrel}
\vspace{-0.2cm}
\end{table}

\paragraph{Theoretical Results.} 



We prove that under certain assumptions on the underlying optimizer $\texttt{OptStep}$, \algname{MeritOpt} converges to the neighborhood of the solution of the target problem when (i) the learning rate is small enough, (ii) $\widehat\mD$ is sufficiently large such that $f_{\widehat \mD}$ is close to $f_{\cD}$, and (iii) the auxiliary problem in Line~\ref{lst:line:aux_problem} is solved with a good accuracy. In Appendix~\ref{appendix:theory}, we provide missing theoretical details (including the proofs) and show that \algname{SGD}, \algname{RMSProp}, \algname{AdaGrad-Norm} satisfy our assumptions. We emphasize that the theoretical results for \algname{MeritOpt-RMSProp} and \algname{MeritOpt-AdaGrad-Norm} are new, since \citet{tupitsa2024federated} provide the theoretical convergence analysis only for the \algname{MeritFed} (\algname{MeritOpt-SGD}) version.

\section{Conclusion}

In this paper, we implement the \algname{MeritOpt} algorithm from the Personalised Federated Learning to the Low-Resource Machine Translation task. We show that it can achieve better results than traditional approaches and requires 2 to 10 times fewer gradient steps than baselines (e.g., 8K vs. 85K, 12K vs. 23K). \algname{MeritOpt} also allows us to observe the weight distribution between the target and related languages: Javanese benefits more from the Indonesian language, while for Tagalog, the most important languages are Indonesian and Malay. Different weights for different languages also prevent the model from overfitting: after learning the target language dataset, its weights are dropped down while other weights start growing. Another takeaway is about the target dataset size: the bigger the dataset is, the more the algorithm keeps relying on it rather than on the auxiliary languages. This might result in worse model performance and ``distract’’ the model from convergence. 

\section*{Limitations}

\begin{itemize}
\item We report results only on Low-Resource MT, while a wide variety of NLP tasks are available. We leave further investigation of \algname{MeritOpt} to other NLP tasks for future work.
\item We report results only on M2M100, while numerous LLMs are available. An alternative model with the \algname{MeritOpt} algorithm applied could further improve the results. The research focuses on the algorithm application to the LRMT task and not on an exhaustive search of all LLM models. 
\item We limit our dataset in terms of size and language variety because of high computational costs and limited resources available.
\item  Our setup with the limited amount of languages and training data used is not designed to directly compare with the existing approaches.
\item We retain all languages during training, even those that do not contribute, which affects the efficiency of the training procedure.

\end{itemize}

\section*{Ethical Statement}
In our research, we utilize the M2M100 model, which has been pre-trained on a diverse MT corpus, including user-generated content. The datasets we use for additional model training have already been presented in WMT-21 Shared Task and Finno-Ugric Benchmark. Although we expect them to be filtered from harmful content, it is important to recognize that some biases may still persist in the model outputs. 

This acknowledgment does not undermine the validity of our methods. We have designed our techniques to be flexible, allowing them to be applied to alternative pre-trained models that have undergone more rigorous debiasing processes. To the best of our knowledge, aside from the challenge of mitigating inherent biases, our work does not raise any additional ethical concerns.

\section*{Acknowledgments}
The work of N.~Tupitsa has been financially supported by The Analytical Center for the Government of the Russian Federation (Agreement No. 70-2021-00143 01.11.2021, IGK 000000D730324P540002)


\bibliography{custom}

\appendix
\onecolumn

\section{Dataset and Model  Details}\label{sec:appendix_params}

In addition to dataset scaling, we also add a preprocessing step: from a deeper look into the data, we can see that some translations contain code snippets, HTML, and pairs containing different addresses and numbers in the input language and output language. To avoid such data, we filter the sentences in the training set so that (i) input and output length in tokens is not less than $5$ tokens and not larger than $256$ tokens, as only a negligible portion of the data exceeded this limit; (ii) we keep sentences with the numbers matching in both input and output; (iii) we keep alphanumeric sentences with basic punctuation only. We also check that both datasets we apply do not contain personally identifying information or offensive content.

We used M2M100 as a base model \cite{m2m100}, MIT Licensed. In CP setting, we pretrained all models on all languages for a maximum of 10 epochs, with the best-performing checkpoint selected for later fine-tuning. Fine-tuning was conducted for up to 60 epochs, and the best-performing checkpoint was reported. The \algname{MeritOpt} model was trained until the score stopped improving, with a maximum computation time of four days. The maximum number of epochs was limited by the amount of available computational resources, as well to perform comparable or even more steps than previous studies on similar datasets \cite{DBLP:journals/bjmc/TarsTF22,sutawika2021data}.

Training parameters included a fixed batch size of 64 and a learning rate of 3e-5. We used a Cosine Annealing Scheduler with a minimum learning rate of 1e-5. The baseline optimizer was \algname{Adam}, with $\beta_1 = 0.9$ and $\beta_2 = 0.98$, in line with previous studies \cite{xie-etal-2021-tentrans}. 

Our implementation primarily relied on PyTorch \cite{paszke2019pytorch} and Transformers \cite{wolf-etal-2020-transformers} libraries. All our artifacts are licensed under Apache 2.0.

\begin{table}[h!]
\centering
\begin{tabular}{lllllll}
\toprule
\multirow{2}{*}{Input Language} & \multirow{2}{*}{Total} & \multicolumn{3}{c}{Filtered Train} & \multirow{2}{*}{Val} & \multirow{2}{*}{Test} \\
\cline{3-5}
                          &                        & Small   & Medium   & Large   &                      &                       \\
\midrule
Indonesian                & 54M                    & 37K     & 74K      & 259K    & 1K                   & 1K                    \\
Malay                     & 13M                    & 26K     & 53K      & 185K    & 1K                   & 1K                    \\
Tagalog                   & 2M                     & 10K     & 20K      & 70K     & 1K                   & 1K                    \\
Tamil                     & 13M                    & 5K      & 10K      & 35K     & 1K                   & 1K                    \\
Javanese                  & 3M                     & 776     & 1.5K     & 5K      & 1K                   & 1K              \\
\bottomrule
\end{tabular}
\caption{Dataset statistics for South East Asian languages. Total denotes the original dataset size in sequences, Filtered small, medium and large train are the subsets used for experiments.}
\label{tab:stats_ind}
\end{table}

\begin{table}[h!]
\centering
\begin{tabular}{llll}
\toprule
\multicolumn{1}{c}{\multirow{2}{*}{Input Language}} & \multicolumn{1}{c}{\multirow{2}{*}{Train}} & \multicolumn{1}{c}{\multirow{2}{*}{Val}} & \multicolumn{1}{c}{\multirow{2}{*}{Test}} \\
\multicolumn{1}{c}{}                          & \multicolumn{1}{c}{}                       & \multicolumn{1}{c}{}                     & \multicolumn{1}{c}{}                      \\
\midrule
North Sami                                           & 61,559                                     & 200                                      & 500                                       \\
Inari Sami                                           & 8,750                                      & 200                                      & 500                                       \\
Skolt Sami                                           & 1,998                                      & 200                                      & 500                                       \\
South Sami                                           & 1,734                                      & 200                                      & 500                            \\
\bottomrule
\end{tabular}
\caption{Dataset statistics for Finno-Samic languages.}
\label{tab:stats_sami}
\end{table}

\clearpage

\section{Illustrative Experiment with Mean Estimation Problem}\label{appendix:mean_estimation}

In this section, we provide an illustrative experiment with the mean estimation problem. That is, the goal is to solve the following minimization problem:
\begin{equation*}
    \min\limits_{x\in \R^d} \left\{f_{\cD}(x) := \E_{\xi \sim \cD}[\|x - \xi\|^2] \right\},
\end{equation*}
where $\cD := \cN(0, \mI)$ is a standard Gaussian distribution. One can show that the optimal value equals $\E_{\xi\sim \cD}\|\xi\|^2 = d$, which is attained at $x^* = 0$. Next, we consider three datasets: $\mD_1$ is sampled from the target distribution $\cD$, $\mD_2$ is sampled from close distribution $\cN(\mu \mathbf{1}, \mI)$, where $\mu = 0.0001$ and $\mathbf{1} := (1, \ldots, 1)^\top \R^d$, and $\mD_3$ is sampled from quite different distribution $\cN(e, \mI)$, where $e$ is some randomly precomputed unit vector. The sizes of the input dataset are: $|\mD_1| = 20$, $|\mD_2| = 1000$, $|\mD_3| = 1000$. Therefore, this situation resembles training for the low-resource language, when two high-resource languages are available. We take mini-batch of $10\%$ for each dataset to compute $g_i(x^t)$ in \algname{MeritOpt} and use simple \algname{SGD} as $\texttt{OptStep}$. Target validation dataset $\widehat\mD$ is sampled from $\cD$ (same distribution as for $\mD_1$) and has size $|\widehat\mD| = 100$ (though only mini-batch of $10$ samples from $\widehat\mD$ is used at each iteration to perform a computation of aggregation weight $w^{t+1}$). To solve the problem in Line~\ref{lst:line:aux_problem}, we run \algname{MD} with learning rate $10$.

\begin{figure}[h!]
    \begin{minipage}[h!]{0.45\textwidth}
        \centering
        \includegraphics[width=1\linewidth]{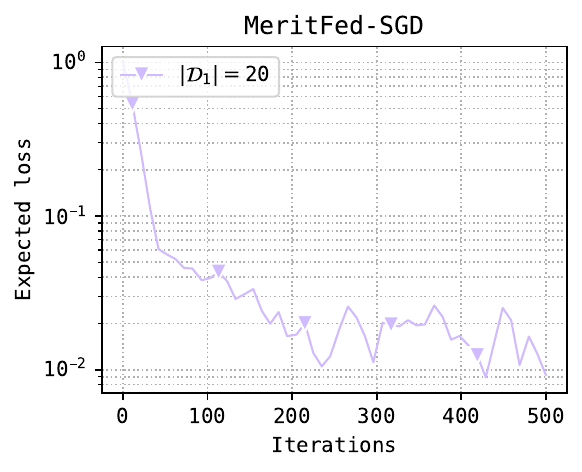}
        \vspace{-0.8cm}
        \label{fig:emean1}
    \end{minipage}
\hfill
    \begin{minipage}[h!]{0.45\textwidth}
        \centering
        \includegraphics[width=1\linewidth]{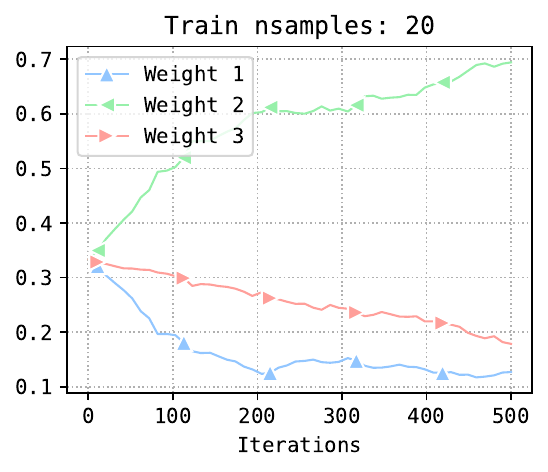}
        \vspace{-0.8cm}
        \label{fig:emean2}
    \end{minipage}
        
\caption{Mean Estimation: $\mu = 0.0001$, MD learning rate = 10.}
\label{fig:mean_estimation}
\end{figure}

The results are presented in Figure~\ref{fig:mean_estimation}. We see that the weight for the first and the third datasets decrease during the training, while the weight for the second dataset increases and remains the largest one. Such a behavior is natural since the batchsize for the target dataset is much smaller than for the second dataset ($2$ and $100$ respectively) and since the second dataset comes from very close distribution to the target one it is more beneficial to use slightly biased but less noisy updates from the second dataset than unbiased but noisy updates from the first dataset. As for the third dataset, its weight decreases since it comes from completely different distribution.

Overall, the result of this experiment are quite consistent with the ones we obtained for Javanese language where the weight for the target language also becomes the smallest after certain number of steps and the highest weight is assigned to close but different language (Indonesian), see Figure~\ref{fig:weight_ind}.

\clearpage

\section{Technical Details and Theoretical Results: Complete Statements and Proofs}\label{appendix:theory}

\subsection{Further Details on Mirror Descent}\label{appendix:MD}

As we explain in Section~\ref{section:methodology}, the aggregation weights are obtained at each step via approximately minimizing validation loss as a function of the aggregation weights. More precisely, in Line~\ref{lst:line:aux_problem} of Algorithm~\ref{alg:meritfed}, the goal is to minimize function $\varphi(w)$ defined as
\begin{equation}
    \varphi(w) = f_{\hat D}\left( \texttt{OptStep}\left( x^t, \sum\limits_{i=1}^n w_i g_i(x^t),  \gamma_t \right) \right), \label{eq:MD_objective}
\end{equation}
where $f_{\hat D}$ is a validation loss for the target language, $\texttt{OptStep}$ is a step of optimization method (e.g., \algname{Adam}), $x^t$ are model parameters after $t$ steps, $\gamma_t$ is a learning rate at step $t$, and $g_i(x^t)$ is a stochastic gradient corresponding to the language $i$. Since Stochastic Mirror Descent (SMD) \citep{nemirovsky1983problem} with Kulback-Leibler distance as a Bregman divergence is a natural choice for the minimization on the probability simplex, which is our case, we use SMD to minimize $\varphi(w)$ on the simplex. The update rule of SMD, in this settings, can be written \citep[Chapter 9]{beck2017first} as
\begin{equation}
    w^{k+1} = \frac{w^k \exp(-\eta \tilde\nabla \varphi(w^k))}{\sum\limits_{i=1}^n w_i^k \exp(-\eta [\tilde\nabla \varphi(w^k))]_i}, \label{eq:update_rule_SMD}
\end{equation}
where $\eta$ is a learning rate for SMD, $\tilde\nabla \varphi(w^k))$ is a stochastic gradient of $\varphi(w)$ for the current weights $w^k$, product of vectors $w^k \exp(-\eta \tilde\nabla \varphi(w^k))$ is computed coordinate-wise, and $[\tilde\nabla \varphi(w^k))]_i$ is the $i$-th component of $\tilde\nabla \varphi(w^k))$.

\subsection{Preliminaries}

In this section, we provide the details on the theoretical convergence results for \algname{MeritOpt}. For notational convenience, we assume that $\mD_i$ comes from distribution $\cD_i$ and denote the corresponding expected loss function as $f_i$ for all $i=1,\ldots, n$. Therefore, according to the introduced notation $f_{1}$ and $f_{\cD}$ denote the same loss function. Similarly to the setup considered by \citet{tupitsa2024federated}, we denote the set of indices such that $\cD_i = \cD_1$: $\cG := \{i\in \{1,\ldots,n\}\mid \cD_i = \cD_1\}$. In other words, for every $i \in \cG$ dataset $\mD_i$ comes from the target distribution and, thus, should be beneficial for the training.

Next, we make the following standard assumption about the stochastic gradients.

\begin{assumption}\label{as:bounded-var}
 For all $i\in \cG$ the stochastic gradient $\g_i(\x)$ is an unbiased estimator of $\nabla\f_i(\x)$ with bounded variance, i.e., $\E_{\xiv_i\sim \cD_i} [\g_i(\x)] = \nabla\f_i(\x)$ and for some $\sigma \geq 0$
    \begin{equation}  \label{eq:xi-var}
    \textstyle
        \E_{\xiv_i\sim \cD_i} \left[\norm*{ \g_i(\x) -  \nabla\f_i(\x)}^2\right] \leq \sigma^2.
    \end{equation}
\end{assumption}

Let $\w^\text{ideal}$ denote a weight vector containing equal non-zero weights only for the datasets from the target distribution. If Assumption~\ref{as:bounded-var} holds, then due to the independence of $\{g_i(x)\}_{i\in\cG}$
\begin{equation}  \label{eq:someeq}
\textstyle
    \E_{\xiv_i} \left[\norm*{ \sum\limits_{i=1}^n \w^\text{ideal}_i g_i(x) -  \nabla\f_1(\x)}^2\right] = \E_{\xiv_i} \left[\norm*{ \frac{1}{|\cG|}\sum\limits_{i\in \cG}  g_i(x) -  \nabla\f_1(\x)}^2\right] \leq \frac{\sigma^2}{\abs{\cG}} \equiv \sigma_*^2.
\end{equation}

We also assume that the objective is $L$-smooth
\begin{assumption}\label{as:lipschitzness}
    $\f_1$ is $L$-smooth, i.e., $\forall\; \x,\y \in \R^d$
    \begin{equation}\label{eq:lipschitzness}\textstyle
		\f_1\rbr{\x} \le \f_1\rbr{\y} + \inp*{\nabla \f_1\rbr{\y}}{\x - \y} + \frac{L}{2}\norm*{\x - \y}^2.
	\end{equation}
\end{assumption}

For the sake of brevity, we will also use the following notation:
\begin{eqnarray*}
    \x^{t+1}(\w) = \texttt{OptStep}\left(\x^t, \sum\limits_{i=1}^n \w_i \g_i(\x^t), \gamma_t\right).
\end{eqnarray*}

\subsection{Generic Scheme of the Proof}

The proof for \algname{MeritOpt-SGD} from \cite{tupitsa2024federated} is based on the assumption that the auxiliary problem can be solved with $\delta$ error:
\begin{eqnarray}\label{eq:aux_approx}
    \E\sbr*{\f_1(\x^{t+1}) | \x^t, \xiv^t} - \min\limits_{w\in \Delta_1^n}f_1\left(x^{t+1}(\w)\right) \le \delta,
\end{eqnarray}
and the following inequality
\begin{eqnarray}\label{eq:aux_opt}
    \min\limits_{w\in \Delta_1^n}f_1\left(x^{t+1}(\w)\right)  \le \f_1(x^{t+1}(\w^\text{ideal})),
\end{eqnarray}
which holds by the definition of the minimum. These two inequalities together imply
 \begin{eqnarray}\label{eq:aux_res}
    \E\sbr*{\f_1(x^{t+1}) | \x^t} \leq \E\sbr*{\f_1(x^{t+1}(\w^\text{ideal})) | \x^t} + \delta.
\end{eqnarray}
The rest of the proof for \algname{MeritOpt-SGD} follows the same scheme as for \algname{SGD} that uses $\sum_{i=1}^n \w^\text{ideal}_i g_i(x)$ as the stochastic gradient, i.e., as for the method $x^{t+1} = x^t - \gamma \sum_{i=1}^n \w^\text{ideal}_i g_i(x^t) = x^t - \frac{\gamma}{|\cG|}\sum_{i\in \cG} g_i(x^t)$.

We noticed, that convergence result of \algname{MeritOpt} envelope can be obtained in the case when the analysis of the method being enveloped uses only two subsequent points and relies on the analysis of the inequality $\E[f_1(x^{t+1})] \leq \E[f_1(x^t)] + \Delta_t$, where $\Delta_t$ is some additional iteration-dependent term. Then, using \eqref{eq:aux_res}, one can show that \algname{MeritOpt} version of the method decreases expected function value not less then the ideal update at each iteration (up to the error term of solving the problem in Line~\ref{lst:line:aux_problem}). In the next two subsections, we provide the results for \algname{MeritOpt-RMSProp} and \algname{MeritOpt-AdaGrad-Norm}.

\subsection{Special Case: \algname{RMSProp}}

In this subsection, we consider \algname{RMSProp} as $\texttt{OptStep}$, i.e.,
\begin{gather*}
    \texttt{OptStep}(x^t, g^t, \gamma_t) = x^t - \frac{\gamma_t}{b_t}g^t,\quad b_t = \sqrt{\beta_2 b_{t-1}^2 + (1-\beta_2)(g^t)^2} + \epsilon,
\end{gather*}
where all arithmetical operations (multiplication, division, summation, taking the square/square root) are coordinate-wise. We emphasize that \algname{RMSProp} can be seen as \algname{Adam} without momentum ($\beta_1 = 0$).

We base our proof on the one from \citep{zaheer2018adaptive}, that additionally uses the following assumption.
\begin{assumption}\label{as:boundedness}
Each component of the stochastic gradient $\g_i\rbr{\x}$ for $i \in \cG$ is bounded, \textit{i.e.}, 
\begin{equation}
    \norm*{\sbr*{\g_i(\x)}_j} \le G.
\end{equation}    
\end{assumption}

\begin{theorem}
    Let Assumptions~\ref{as:bounded-var}, \ref{as:lipschitzness}, \ref{as:boundedness} hold. If Line~\ref{lst:line:aux_problem} is solved with error $\delta \geq 0$ (see \eqref{eq:aux_approx}), then \algname{MeritOpt-RMSProp} with $\gamma_t = \gamma \leq \frac{\epsilon}{2L}$ and $\beta_2 \geq 1 - \frac{\epsilon^2}{16G^2}$ after $T$ iterations satisfy
    \begin{equation*}
        \min\limits_{t=0,\ldots, T-1}\E\|\nabla f_1(x^t)\|^2 \le 2\rbr{\sqrt{\beta_2} G + \epsilon} \times \sbr*{\frac{\rbr*{\f_1\rbr*{\x^{0}} - \f_1\rbr*{\x^{*}}}}{\gamma T} + \sigma_*^2 \rbr*{\frac{\gamma G \sqrt{1-\beta_2}}{\epsilon^2} + \frac{L \gamma^2}{2\epsilon^2}} + \frac{\delta}{\gamma}}.
    \end{equation*}
\end{theorem}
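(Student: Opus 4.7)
The plan is to combine the generic \algname{MeritOpt} meta-argument (equations \eqref{eq:aux_approx}--\eqref{eq:aux_res}) with the non-convex analysis of \algname{RMSProp} due to \citet{zaheer2018adaptive}. First, I would reduce the analysis of \algname{MeritOpt-RMSProp} to the analysis of a single \algname{RMSProp} step driven by the ``ideal'' aggregated gradient. Concretely, by the $\delta$-accuracy of the auxiliary problem in Line~\ref{lst:line:aux_problem} and the feasibility of $w^{\text{ideal}}$ in $\Delta_1^n$, one has
\[
\E\sbr*{f_1(x^{t+1})\mid x^t} \;\le\; \E\sbr*{f_1(x^{t+1}(w^{\text{ideal}}))\mid x^t} + \delta.
\]
Under $w^{\text{ideal}}$ the aggregated gradient is $\bar g_t := \tfrac{1}{|\cG|}\sum_{i\in\cG} g_i(x^t)$, which by Assumption~\ref{as:bounded-var} and \eqref{eq:someeq} is an unbiased estimator of $\nabla f_1(x^t)$ with variance at most $\sigma_*^2$, and which still satisfies $\norm*{\bar g_t}_\infty \le G$ by Assumption~\ref{as:boundedness}. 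So throughout the remainder of the argument the \algname{MeritOpt} layer contributes only an additive $+\delta$ per iteration.

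Next, I would apply $L$-smoothness of $f_1$ to the one-step update $x^{t+1}(w^{\text{ideal}}) = x^t - (\gamma/b_t)\bar g_t$:
\[
f_1(x^{t+1}(w^{\text{ideal}})) \le f_1(x^t) - \gamma\inp*{\nabla f_1(x^t)}{\bar g_t/b_t} + \tfrac{L\gamma^2}{2}\norm*{\bar g_t/b_t}^2,
\]
and take conditional expectation given $x^t$. The hard part is that $b_t$ depends on $\bar g_t$, so the cross term is not sign-definite. Following \citet{zaheer2018adaptive}, I would introduce a surrogate denominator $\tilde b_t$ that is measurable with respect to $x^t$ (e.g.\ obtained by replacing $\bar g_t^2$ inside $b_t$ with its conditional expectation), replace $b_t$ by $\tilde b_t$ in the cross term, and control the perturbation $\abs*{1/b_t - 1/\tilde b_t}$ by a first-order Taylor expansion, using $b_t,\tilde b_t \ge \epsilon$ and $\norm*{\bar g_t}_\infty \le G$. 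This introduces an error of order $\sqrt{1-\beta_2}\,G/\epsilon^2$ per coordinate; the calibrated choice $\beta_2 \ge 1 - \epsilon^2/(16 G^2)$ is precisely what is needed so that this error is dominated by the resulting negative drift term $-\gamma\norm*{\nabla f_1(x^t)}^2/(2(\sqrt{\beta_2} G + \epsilon))$, where the denominator comes from the deterministic upper bound on $\tilde b_t$ that is derived by induction from Assumption~\ref{as:boundedness}.

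Third, the quadratic term would be bounded by $\tfrac{L\gamma^2}{2\epsilon^2}(\norm*{\nabla f_1(x^t)}^2 + \sigma_*^2)$ using $b_t \ge \epsilon$ and the bias/variance decomposition of $\bar g_t$; the stepsize condition $\gamma \le \epsilon/(2L)$ then allows this to be absorbed into half of the negative drift. Putting the three pieces together yields the per-iteration descent inequality
\[
\E f_1(x^{t+1}) \le \E f_1(x^t) - \frac{\gamma}{2(\sqrt{\beta_2} G + \epsilon)}\E\norm*{\nabla f_1(x^t)}^2 + \gamma\sigma_*^2\rbr*{\frac{G\sqrt{1-\beta_2}}{\epsilon^2} + \frac{L\gamma}{2\epsilon^2}} + \delta.
\]
Finally, I would telescope this from $t=0$ to $T-1$, use $f_1(x^T) \ge f_1(x^*)$, and divide both sides by $T\gamma/(2(\sqrt{\beta_2} G + \epsilon))$ to produce a bound on $\min_{t<T}\E\norm*{\nabla f_1(x^t)}^2$ that matches the claimed inequality. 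The main obstacle is the decoupling of $b_t$ from $\bar g_t$ inside the cross term; the meta-level \algname{MeritOpt} step and the telescoping are routine, and the assumptions on $\gamma$ and $\beta_2$ are designed exactly to make the surrogate-denominator perturbation negligible.
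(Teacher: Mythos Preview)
Your proposal is correct and follows essentially the same route as the paper: invoke the \algname{MeritOpt} meta-inequality \eqref{eq:aux_res} to reduce to the ideal-weight update, then apply the per-iteration descent inequality of \citet{zaheer2018adaptive} (which you sketch in more detail than the paper, but with the same surrogate-denominator decoupling and the same role for the conditions on $\gamma$ and $\beta_2$), and telescope. The only cosmetic difference is that the paper simply quotes the Zaheer et al.\ inequality as a black box, whereas you outline its internal mechanics; the logical structure and the resulting bound are identical.
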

\begin{proof}
We start with the following inequality from the page 13  of~\cite{zaheer2018adaptive}
\begin{eqnarray*}
    \E\sbr{\f_1(x^{t+1}(\w^\text{ideal})) | \x^t} \leq \f_1(\x^{t}) - \frac{\gamma_t}{2\rbr*{\sqrt{\beta_2} G+ \epsilon}}  \norm*{\nabla\f_1(\x^{t})}^2  + \rbr*{\frac{\gamma_t G \sqrt{1-\beta_2}}{\epsilon^2} + \frac{L \gamma_t^2}{2\epsilon^2}} \sigma_*^2
\end{eqnarray*}
in a slightly adjusted form. In fact, this inequality holds for any $\x^t$ and ideally aggregated gradients $\sum\limits_{i=1}^n \w^\text{ideal}_i g_i(\x^t)$. Applying~\eqref{eq:aux_res}, we get 
\begin{eqnarray*}
    \E\sbr*{\f_1(\x^{t+1})| \x^t} \leq \f_1(\x^{t}) - \frac{\gamma_t}{2\rbr*{\sqrt{\beta_2} G+ \epsilon}}  \norm*{\nabla\f_1(\x^{t})}^2  + \rbr*{\frac{\gamma_t G \sqrt{1-\beta_2}}{\epsilon^2} + \frac{L \gamma_t^2}{2\epsilon^2}} \sigma_*^2 + \delta.
\end{eqnarray*}
Following the same steps of the rest of the proof from \citep{zaheer2018adaptive}, we obtain
\begin{eqnarray*}
    \frac{1}{T}\sum_{t=0}^{T-1}\E \norm*{\nabla\f_1\rbr*{\x^{t}}}^{2} \le 2\rbr{\sqrt{\beta_2} G + \epsilon} \times \sbr*{\frac{\rbr*{\f_1\rbr*{\x^{0}} - \f_1\rbr*{\x^{*}}}}{\gamma T} + \sigma_*^2 \rbr*{\frac{\gamma G \sqrt{1-\beta_2}}{\epsilon^2} + \frac{L \gamma^2}{2\epsilon^2}} + \frac{\delta}{\gamma}}, 
\end{eqnarray*}
where $\gamma_t = \gamma \leq \frac{\epsilon}{2L}$ is used. It remains to notice that $\min\limits_{t=0,\ldots, T-1}\E\|\nabla f_1(x^t)\|^2 \leq \frac{1}{T}\sum\limits_{t=0}^{T-1}\E \norm*{\nabla\f_1\rbr*{\x^{t}}}^{2}$.
\end{proof}

\subsection{Special Case: \algname{AdaGrad-Norm}}

In this subsection, we consider \algname{AdaGrad-Norm} \cite{ward2018adagrad} as $\texttt{OptStep}$, i.e.,
\begin{gather*}
    \texttt{OptStep}(x^t, g^t, \gamma_t) = x^t - \frac{\gamma_t}{b_{t+1}}g^t,\quad b_{t+1} = \sqrt{b_{t}^2 + \|g^t\|^2}.
\end{gather*}

We base our proof on the one from \citep{ward2018adagrad}, that additionally uses the following assumption.
\begin{assumption}\label{as:norm_bounded}
    Gradients $\nabla \f_i(\x)$ are uniformly bounded for $i \in \cG$:
    \begin{equation}
        \| \nabla \f_i(\x)\| \le G.
    \end{equation}
\end{assumption}


\begin{theorem}
    Let Assumptions~\ref{as:bounded-var}, \ref{as:lipschitzness}, \ref{as:norm_bounded} hold. If Line~\ref{lst:line:aux_problem} is solved with error $\delta \geq 0$ (see \eqref{eq:aux_approx}), then \algname{MeritOpt-AdaGrad-Norm} with after $T$ iterations satisfy
    \begin{align*}
      \min_{t \leq T} \left( \mathbb{E} \left[ \|\nabla \f_1(x^t)\|^{
   \frac{4}{3}} \right] \right)^\frac{3}{2}  \leq  \left( \frac{2b_0}{T} + \frac{4(G+\sigma_*)}{\sqrt{T}}\right)C_{F},
\end{align*}
where $$C_{F} = \frac{\delta T}{\gamma} + \frac{ \f_1(x^0) - \f_1(x^*) }{\gamma}+ \frac{4\sigma_*+\gamma L}{2}\log \left(\frac{20T\left( \sigma_*^2 + G^2\right)}{b_0^2} +10 \right). $$
\end{theorem}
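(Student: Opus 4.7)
The plan is to follow the generic envelope scheme laid out in the previous subsection and graft it onto the \algname{AdaGrad-Norm} convergence analysis of \citet{ward2018adagrad}. First, I would write down the standard one-step descent lemma for \algname{AdaGrad-Norm} applied with the \emph{ideal} aggregated gradient $g^t_{\text{ideal}} = \sum_{i=1}^n w_i^{\text{ideal}} g_i(x^t) = \frac{1}{|\cG|}\sum_{i\in\cG} g_i(x^t)$. By Assumption~\ref{as:bounded-var} and \eqref{eq:someeq}, $g^t_{\text{ideal}}$ is an unbiased estimator of $\nabla f_1(x^t)$ with variance at most $\sigma_*^2$; by Assumption~\ref{as:norm_bounded} it has bounded mean $G$ and hence $\|g^t_{\text{ideal}}\|^2$ has expectation at most $\sigma_*^2 + G^2$. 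Combined with $L$-smoothness (Assumption~\ref{as:lipschitzness}), this yields a per-step bound of the form $\mathbb{E}[f_1(x^{t+1}(w^{\text{ideal}}))\mid x^t] \leq f_1(x^t) - \gamma\,\mathbb{E}\bigl[\|\nabla f_1(x^t)\|^2/\tilde b_{t+1}\mid x^t\bigr] + \text{(correction terms)}$, where the proxy denominator $\tilde b_{t+1}$ replaces the current stochastic gradient by the true gradient to decouple numerator and denominator. This decoupling is the technical heart of the \algname{AdaGrad-Norm} proof and will be the main obstacle to carry over verbatim.

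Second, I would invoke the envelope inequality \eqref{eq:aux_res} to lift the descent for $x^{t+1}(w^{\text{ideal}})$ to the actual iterate $x^{t+1}$ produced by \algname{MeritOpt-AdaGrad-Norm}, at the cost of an additive $\delta$ per step. Telescoping from $t=0$ to $T-1$ collapses the $f_1(x^t)-f_1(x^{t+1})$ chain into $f_1(x^0)-f_1(x^*)$ and produces the $\delta T/\gamma$ contribution inside $C_F$. The correction terms accumulate into the logarithmic factor via the classical identity $\sum_{t=0}^{T-1}\|g^t\|^2/b_{t+1}^2 \leq \log(b_T^2/b_0^2)$, with the expectation of $b_T^2$ controlled by $b_0^2 + T(\sigma_*^2+G^2)$; this reproduces the $\log\!\bigl(\tfrac{20T(\sigma_*^2+G^2)}{b_0^2}+10\bigr)$ term appearing in $C_F$, while the $\tfrac{4\sigma_*+\gamma L}{2}$ prefactor tracks the noise/smoothness constants.

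Third, to extract the $\min_{t\leq T}(\mathbb{E}\|\nabla f_1(x^t)\|^{4/3})^{3/2}$ quantity, I would apply H\"older's inequality with conjugate exponents $(3/2, 3)$ in the form $\|\nabla f_1(x^t)\|^{4/3} = \bigl(\|\nabla f_1(x^t)\|^2/\tilde b_{t+1}\bigr)^{2/3}\cdot \tilde b_{t+1}^{2/3}$, so that
\begin{equation*}
\sum_{t=0}^{T-1}\|\nabla f_1(x^t)\|^{4/3} \leq \Bigl(\sum_{t=0}^{T-1}\|\nabla f_1(x^t)\|^2/\tilde b_{t+1}\Bigr)^{2/3}\Bigl(\sum_{t=0}^{T-1}\tilde b_{t+1}^2\Bigr)^{1/3}.
\end{equation*}
Taking expectations, using Jensen on both factors, bounding $\mathbb{E}[b_T]\leq b_0 + 2\sqrt{T}(G+\sigma_*)$, and using $T\cdot\min_t \mathbb{E}\|\nabla f_1(x^t)\|^{4/3} \leq \sum_t \mathbb{E}\|\nabla f_1(x^t)\|^{4/3}$, one raises the result to the $3/2$ power to obtain exactly the prefactor $\frac{2b_0}{T}+\frac{4(G+\sigma_*)}{\sqrt T}$ multiplying $C_F$. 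The $\delta$ term, being iteration-uniform, passes through all of these manipulations and lands cleanly inside $C_F$, completing the proof.
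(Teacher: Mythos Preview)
Your proposal is correct and takes essentially the same approach as the paper: both establish the one-step \algname{AdaGrad-Norm} descent inequality of \citet{ward2018adagrad} for the ideal update $x^{t+1}(w^{\text{ideal}})$, invoke the envelope inequality~\eqref{eq:aux_res} to pass to the actual iterate at the cost of $\delta$ per step, and then defer the remaining telescoping and H\"older manipulations to the original analysis. The paper's proof is terser---it simply writes the first line of the Ward et al.\ argument and says ``following the same steps''---whereas you spell out the H\"older step with exponents $(3/2,3)$, the logarithmic bound on $\sum_t \|g^t\|^2/b_{t+1}^2$, and the control of $\mathbb{E}[b_T]$; but the skeleton is identical.
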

\begin{proof}
    Notating $\widetilde g^{t} = \frac{1}{|\cG|}\sum_{i\in \cG}g_i(x^t)$  and $\widetilde b_{t+1} = \sqrt{b_{t}^2 + \|\widetilde g^t\|^2}$ we rewrite the first line of the main proof from \citep{ward2018adagrad} as
\begin{eqnarray*}
    \frac{\f_1(x^{t+1}(\w^\text{ideal})) - \f_1(x^t) }{\gamma} \leq \frac{-\langle{ \nabla \f_1(x^t),\widetilde g^{t}} \rangle }{\widetilde b_{t+1}} +\frac{ \gamma  L}{2{\widetilde b_{t+1}}^2} \|\widetilde g^{t}\|^2  
    \\
    = -\frac{\|\nabla \f_1(x^t)\|^2}{\widetilde b_{t+1}} +\frac{ \langle{ \nabla \f_1(x^t), \nabla \f_1(x^t) - \widetilde g^{t} \rangle} }{\widetilde b_{t+1}}+\frac{ \gamma  L\|\widetilde g^{t}\|^2}{2{\widetilde b_{t+1}}^2}.
\end{eqnarray*}
Applying~\eqref{eq:aux_approx} and~\eqref{eq:aux_opt}, we get the following inequality: 
\begin{eqnarray*}
    \frac{\f_1(\x^{t+1}) - \delta - \f_1(x^t) }{\gamma} \leq -\frac{\|\nabla \f_1(x^t)\|^2}{b_{t+1}} +\frac{ \langle{ \nabla \f_1(x^t), \nabla \f_1(x^t) - g^{t} \rangle} }{b_{t+1}}+\frac{ \gamma  L\|g^{t}\|^2}{2b_{t+1}^2}.
\end{eqnarray*}
Then, following the same steps as in the main proof from \cite{ward2018adagrad}, we derive 
\begin{align*}
      \min_{t \leq T} \left( \mathbb{E} \left[ \|\nabla \f_1(x^t)\|^{
   \frac{4}{3}} \right] \right)^\frac{3}{2}  \leq  \left( \frac{2b_0}{T} + \frac{4(G+\sigma_*)}{\sqrt{T}}\right)C_{F},
\end{align*}
where $$C_{F} = \frac{\delta T}{\gamma} + \frac{ \f_1(x^0) - \f_1(x^*) }{\gamma}+ \frac{4\sigma_*+\gamma L}{2}\log \left(\frac{20T\left( \sigma_*^2 + G^2\right)}{b_0^2} +10 \right).$$
This finishes the proof.
\end{proof}

\clearpage
\section{Accelerating MeritOpt}

In this section, we outline preliminary experiments designed to accelerate our approach. We limit the training to 24 hours or to 180K iterations due to time and resource constraints. Moreover, XXX... The simple heuristics are as follows:
\begin{itemize}
    \item \algname{CT + MeritOpt}: MeritOpt is applied once the CT stage reaches its peak performance or computational limit. This heuristic aims to refine the model after it has acquired sufficient knowledge of the target language in default setting.
  
    \item \algname{MeritOpt-Drop}: During training, a language is dropped at the end of an epoch if its weight falls below a predefined threshold. This heuristic is intended to avoid unnecessary computations for languages that do not contribute to model improvement. In our experiments, the threshold was set to 0.15 with 5 languages and 0.2 with 4 languages.
  
    \item \algname{MeritOpt-Cycle}: MeritOpt is applied selectively at certain epochs, while other epochs are trained solely with top-1 weighted language, determined with MeritOpt. This heuristic seeks to introduce MeritOpt intermittently, guiding the model updates towards a more beneficial direction by leveraging multiple languages in a controlled manner.
\end{itemize}

\begin{table}[t]
\centering
\begin{tabular}{cccccccccc}
\hline
\multirow{2}{*}{\textbf{Setting}} & \multicolumn{3}{c}{\textbf{North Sami}} & \multicolumn{3}{c}{\textbf{Java}} & \multicolumn{3}{c}{\textbf{Tagalog}} \\
\cmidrule(lr){2-4} \cmidrule(lr){5-7} \cmidrule(lr){8-10} 
 & \textbf{Score} & \textbf{Time} & \textbf{Iters} & \textbf{Score} & \textbf{Time} & \textbf{Iters} & \textbf{Score} & \textbf{Time} & \textbf{Iters} \\
\midrule
CT & 50.19 & 13H & 180K & 21.04 & 2H & 37K & 33.12 & 9H & 149K \\
\midrule
\algname{MeritOpt} & 41.85 & 24H & 16K & 21.12 & 13H & 3.5K & 32.56 & 24H & 10K \\
\algname{CT + MeritOpt} & 50.95 & 25H & 186K & 21.23 & 8H & 39K & 33.65 & 29H & 101K \\
\algname{MeritOpt-Drop} & 41.43 & 24H & 13K & 20.64 & 2H & 1K & 33.46 & 24H & 13K \\
\algname{MeritOpt-Cycle} & 49.72 & 24H & 180K & 21.31 & 14H & 60K & 32.29 & 13H & 58K \\
\bottomrule
\end{tabular}
\caption{Performance comparison for different settings across languages}
\label{table:comparison_reformatted}
\end{table}

In our experiments, we observed that the performance of the combined \algname{CT + MeritOpt} setting, though noisy and showing minor improvements, did not consistently outperform the individual approaches. For instance, while spBLEU improved slightly, the fluctuations were significant, making the gains less reliable. \algname{MeritOpt} setting showed stable improvement in the last 5K iterations, but performance plateaued. In contrast, the \algname{MeritOpt-Cycle} setting reached a plateau quickly, and the last 50K iterations offered no further gains, with SME still dominating in terms of weight importance.

For the Java dataset, the \algname{CT + MeritOpt} setup showed unpredictable and noisy behavior, with some checkpoints marginally outperforming the original, but with only minor improvements. Interestingly, the weight distribution across languages remained similar to that of the \algname{MeritOpt} setting, except for Java, which exhibited more stability as it was already well-learned. In the \algname{MeritOpt-Drop} setting, Java and other languages were quickly removed, which led to a sharp decline in performance, likely due to the exclusion of target languages, despite the validation loss remaining stable.

In the Tagalog experiments, the \algname{CT + MeritOpt} setup demonstrated steady spBLEU improvements. In the \algname{MeritOpt-Drop} setting, Tagalog and other languages were removed later in training, yet the model’s performance remained stable, likely due to the retention of the target language. Interestingly, this setting yielded results that outperformed both  \algname{MeritOpt} and CT, although it still fell short of the \algname{CT + MeritOpt} approach. Finally, in the \algname{MeritOpt-Cycle} setting, spBLEU fluctuated but stabilized at a reasonable level. While the \algname{MeritOpt} mechanism caused notable gains and losses, all languages held the top-1 position at some point during training, which added unpredictability.

These results highlight that dropping languages can lead to overfitting, particularly if the target language is excluded. The Tagalog case appears to be unique, suggesting that permanently disabling languages could be beneficial only when we are certain they are no longer necessary. Moreover, the \algname{MeritOpt-Cycle} approach often re-prioritized already well-learned languages, potentially hindering performance. 
We suggest that loss averaging by batch could be weighted based on the significance of samples as calibrated by \algname{MeritOpt}, reducing the contribution of less important samples. Lastly, our results indicate that \algname{CT + MeritOpt} offers limited gains, likely because the model converged to the suboptimal local minima, however plain \algname{MeritOpt} converges differently.

\end{document}